\renewcommand{\d}{\,\mathrm{d}}
\newcommand{\R}{\mathbb{R}}
\newcommand{\E}{\mathbb{E}}
\newcommand{\N}{\mathcal{N}}
\newcommand{\score}{\nabla \log}
\newcommand{\F}[2]{\ensuremath{\operatorname{F}\!\left(#1\,\|\,#2\right)}}
\newcommand{\Fn}[2]{\ensuremath{\operatorname{F}^n\!\left(#1\,\|\,#2\right)}}
\newcommand{\KL}[2]{\ensuremath{\operatorname{KL}\!\left(#1\,\|\,#2\right)}}
\newcommand{\angles}[2]{\left\langle #1, #2 \right\rangle}
\newcommand{\intr}{\int_{\R^d}}
\newcommand{\ddtau}[1][]{\frac{\partial #1}{\partial \tau}}
\newcommand{\at}[2]{\left.#1\right|_{#2}}
\newcommand{\ddtauatt}[1][]{\at{\ddtau[#1]}{\tau=t}}
\newcommand{\f}{f}
\numberwithin{equation}{section}
\newtheorem{theorem}{Theorem}[section]
\newtheorem{remark}[theorem]{Remark}
\begin{document}

\title[Score-based deterministic density sampling]{Score-based deterministic density sampling}

\author[V.~Ilin]{Vasily Ilin$^{1*}$}
\address{$^1$Department of Mathematics, University of Washington, Seattle, WA 98195, United States}
\email{vilin@uw.edu}

\author[P.~Sushko]{Peter Sushko$^2$}
\address{$^2$Allen Institute for AI, Seattle, WA 98103, United States}
\email{peters@allenai.org}

\author[J.~Hu]{Jingwei Hu$^3$}
\address{$^3$Department of Applied Mathematics, University of Washington, Seattle, WA 98195, United States}
\email{hujw@uw.edu}

\thanks{$^*$Corresponding author.}

\begin{abstract}
We propose a deterministic sampling framework using Score-Based Transport Modeling for sampling an unnormalized target density $\pi$ given only its score $\nabla \log \pi$. Our method approximates the Wasserstein gradient flow on $\mathrm{KL}(f_t\|\pi)$ by learning the time-varying score $\nabla \log f_t$ on the fly using score matching. While having the same marginal distribution as Langevin dynamics, our method produces smooth deterministic trajectories, resulting in monotone noise-free convergence. We prove that our method dissipates relative entropy at the same rate as the exact gradient flow, provided sufficient training. Numerical experiments validate our theoretical findings: our method converges at the optimal rate, has smooth trajectories, and is often more sample efficient than its stochastic counterpart. Experiments on high-dimensional image data show that our method produces high-quality generations in as few as 15 steps and exhibits natural exploratory behavior. The memory and runtime scale linearly in the sample size.
\end{abstract}

\subjclass[2020]{Primary 65C05, 35Q84, 49Q22; Secondary 60H10, 68T07.}
\keywords{Deterministic sampling; score-based transport modeling; Wasserstein gradient flow; relative entropy; Fisher information; log-Sobolev inequality; annealing; neural network; neural tangent kernel.}

\maketitle

\section{Introduction}
Diffusion generative modeling (DGM) \cite{song2019generative, song2020score} has emerged as a powerful set of techniques to generate ``more of the same thing'', i.e., given many samples from some unknown distribution $\pi$, train a model to generate more samples from $\pi$. While the SDE-based generation is commonly used in practice, its deterministic counterpart, termed ``probability flow ODE'' by \cite{song2020score}, offers several practical advantages -- higher order solvers \cite{huang2024convergence}, better dimension dependence \cite{chen2024probability}, and interpolation in the latent space \cite{song2020denoising}. The two processes are given \cite{chen2022sampling} by the reverse of the Ornstein-Uhlenbeck (OU) process and the corresponding ODE, respectively:
\begin{align}
    \d X_t &= \left(X_t + 2\score f_t(X_t)\right)\d t + \sqrt{2}\d B_t \label{eqn: DGM SDE} \tag{DGM SDE}\\
    \d X_t &= \left(X_t + \score f_t(X_t)\right)\d t,\label{eqn: DGM ODE} \tag{DGM ODE}
\end{align}
where $B_t$ is the Brownian motion and $f_t$ is the law of the OU process. The score $\score f_t$ is learned by running the OU process from $\pi$ to $\N(0,I_d)$, which crucially depends on having many samples from $\pi$. In this work we pose and attempt to answer the following question: \textbf{How to deterministically sample an unnormalized density $\pi$ in the absence of samples, using techniques of diffusion generative modeling?} We propose an analogue of \eqref{eqn: DGM ODE} when no samples from $\pi$ are available. The absence of samples from $\pi$ makes sampling a much harder problem than DGM. Indeed, DGM can be reduced to unnormalized density sampling by estimating the score $\score \pi$ on the samples, but the converse is not true; recently \cite{he2025query} proved the exponential in $d$ query complexity lower bound for density sampling, while DGM is solvable in polynomial time \cite{chen2022sampling}.

{\bf Contributions.} Our main contributions are a deterministic sampling algorithm, scalable to high dimensions, and a proof of fast convergence under the log-Sobolev inequality. Unlike the classical Langevin dynamics, our algorithm produces smooth deterministic trajectories, and gives access to the otherwise intractable score $\score f_t$ allowing for convergence estimation via the relative Fisher information. Unlike other deterministic interacting particle systems, the memory and runtime scale linearly in the number of particles. The convergence analysis is based on a system of coupled gradient flows, one for the samples and one for the weights of the neural network. This is a novel framework that can be of independent interest as a general technique for analyzing convergence of NN-based approximations of gradient flows. We are not aware of any other work that uses the neural tangent kernel for analyzing dynamically changing loss. Our main theoretical result is Theorem \ref{thm: main result}.

{\bf Related work.} Sampling from unnormalized densities traditionally relies on stochastic algorithms such as Langevin dynamics, whose fast convergence under isoperimetry assumptions is well-understood \cite{dalalyan2017user,wibisono2018sampling,vempala2019rapid,chewi2023log}. Deterministic alternatives, notably Stein Variational Gradient Descent (SVGD) \cite{liu2016stein,duncan2019geometry}, rely heavily on handcrafted kernels, resulting in limited theoretical guarantees and empirical success, especially in high dimensions \cite{zhuo2018message}. Inspired by deterministic probability-flow ODEs from diffusion generative modeling \cite{hyvarinen2005scorematching,song2019generative,song2020score,chen2024probability,huang2024convergence}, which often outperform their stochastic counterparts due to high-order integrators, better dimension dependence, and latent-space interpolation capabilities \cite{song2020denoising,lu2022dpm}, Boffi and Vanden-Eijnden proposed \cite{boffi2023probability} Score-Based Transport Modeling (SBTM) as a general method of solving the Fokker-Planck equation with a deterministic interacting particle system, building on neural approximations of gradient flows \cite{xu2022normalizing,elamvazhuthi2024score}. However, it has not been used for sampling. Our work fills this gap by adapting the method for the problem of sampling and by providing entropy dissipation guarantees and proving rapid convergence for the coupled particle system dynamics, integrating smoothly with recent annealing approaches like those proposed by \cite{neal2001annealed,corrales2025annealed,chehab2024practical}.

{\bf Organization.} The rest of the paper is organized as follows. In Section \ref{sec: GF} we introduce the main algorithm, building on Score-Based Transport Modeling \cite{boffi2023probability}. In Section \ref{sec: convergence} we prove that the resulting dynamics achieve the optimal convergence rate. In Section \ref{sec: experiments} we verify convergence in several numerical experiments. The paper is concluded in Section \ref{sec:conclusion}.

\begin{figure}[htp!]
    \centering
    \includegraphics[width=\linewidth]{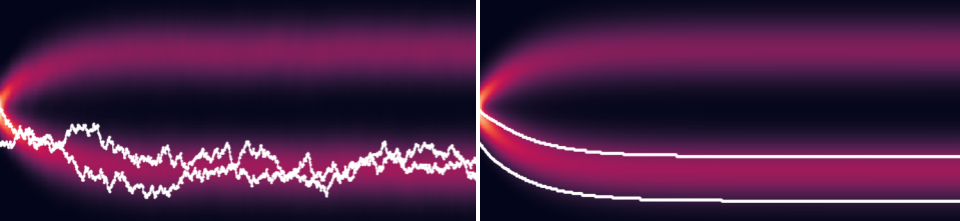}
    \caption{Left: Langevin dynamics (stochastic). Right: ours (deterministic). The deterministic algorithm has the same marginal distributions as the stochastic one but with smooth trajectories. In this plot both algorithms interpolate between the unit Gaussian and a mixture of two Gaussians.}
    \label{fig: SDE vs SBTM bimodal gaussian}
\end{figure}

\section{Deterministic sampling algorithm}\label{sec: GF}
Diffusion generative modeling \eqref{eqn: DGM SDE} and \eqref{eqn: DGM ODE} converges quickly because it is the reverse of a fast process, namely the OU process. In the absence of samples from $\pi$, there is no known numerically tractable process that can be run from $\pi$ to $f_0$ and reversed, and the dimension-exponential lower bound of \cite{he2025query} suggests that it cannot exist. Thus, the standard approach to classical sampling is to greedily minimize a divergence, most commonly relative entropy $\KL{f_t}{\pi} := \E_{f_t} \log\frac{f_t}{\pi}$, between $f_t$ and $\pi$ at every time step. In continuous time, this is called a Wasserstein gradient flow (GF). The Wasserstein GF on $\KL{\cdot}{\pi}$ can be implemented stochastically or deterministically, mimicking equations \eqref{eqn: DGM SDE} and \eqref{eqn: DGM ODE}:
\begin{align}
    \d X_t &= \score \pi(X_t) \d t + \sqrt{2} \d B_t,\quad B_t := \text{Brownian motion}, \label{eqn: GF SDE} \tag{GF SDE}\\
    \d X_t &= \score \pi(X_t) \d t - \score f_t(X_t) \d t, \quad f_t := \text{law}(X_t). \label{eqn: GF ODE} \tag{GF ODE}
\end{align}

\begin{figure}
\centering
\includegraphics[width=0.5\linewidth]{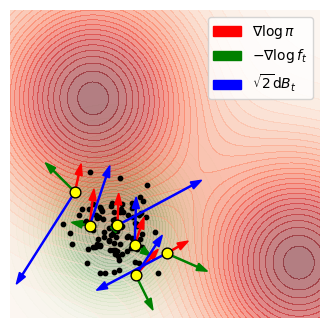}
\caption{Visualization of \eqref{eqn: GF ODE}. Particles are pulled towards target \textcolor{red}{$\pi$} and away from particle density \textcolor{DarkGreen}{$f_t$}. \textcolor{blue}{Brownian motion} acts randomly.}
\label{fig: score visualization}
\end{figure}
Remarkably, the distributions of $X_t$ for a fixed $t$ in equations \eqref{eqn: GF SDE} and \eqref{eqn: GF ODE} match exactly. See \cite{jordan1998variational} for the rigorous derivation and connection to Optimal Transport and see Figure \ref{fig: SDE vs SBTM bimodal gaussian} for a visualization. While the former equation is well-studied under the names Langevin Dynamics or ULA, the latter can be more appealing due to determinism, for example due to smooth trajectories, higher order time integrators, and potentially better sample complexity. In this work we simulate the \eqref{eqn: GF ODE}. The term $-\score f_t$ in the ODE provides deterministic exploration by making samples $X_t^1,\dots,X_t^n$ repel from each other, since more particles appear where $f_t$ is larger. We visualize the forces $\score \pi, -\score f_t$ and the noise $\sqrt{2}\d B_t$ in Figure \ref{fig: score visualization}.

\begin{remark}
    While equations \eqref{eqn: DGM ODE} and \eqref{eqn: GF ODE} look similar, there is a big difference in how $f_t$ is defined. In \eqref{eqn: DGM ODE}, $f_t$ is given by the OU process started from $\pi$. In \eqref{eqn: GF ODE} $f_t$ follows the gradient flow started from $f_0$, usually chosen to be Gaussian $\N(0,1)$.
\end{remark}

\subsection{Score-Based Transport Modeling}

Score-based transport modeling (SBTM) was introduced in \cite{boffi2023probability} as a method of solving Fokker-Planck equations, of which \eqref{eqn: GF ODE} is a special case. Additionally, SBTM was recently successfully employed to solve other Fokker-Planck-type equations \cite{ilin2024transport, lu2024score, huang2024score}.

The core idea of SBTM is to approximate $\score f_t$ with a neural network $s^{\Theta_t}$ trained on the sample $X_t^1, ..., X_t^n$. The training is done by minimizing the score matching loss from \cite{hyvarinen2005scorematching, song2019generative}. The SBTM dynamics approximate \eqref{eqn: GF ODE} and are given by the interacting particle system
\begin{equation} \label{eqn: SBTM} \tag{SBTM}
    \begin{aligned}
        \frac{\d X_t}{\d t} &= \score \pi(X_t) - s^{\Theta_t}(X_t), \quad X_t = (X_t^1,...,X_t^n), \quad X^i_0 \sim f_0 \text{ iid}\\
        \frac{\d \Theta_t}{\d t} &= -\eta \nabla_{\Theta_t} L(s^{\Theta_t}, f_t), \quad X_t \sim f_t,
    \end{aligned}
\end{equation}
where $L(s, f_t) = \E_{f_t} \|s - \score f_t\|^2$ is the dynamically changing score-matching loss \cite{hyvarinen2005scorematching} and $\eta(t)$ controls the amount of NN training relative to sampling dynamics. The resulting interacting particle system is a coupled gradient flow, where $f_t$ approximates the Wasserstein GF on $\KL{f_t}{\pi}$, and $\Theta_t$ undergoes the Euclidean GF on the dynamically changing loss $L(s^{\Theta_t}, f_t)$, while $\eta$ controls the relative speed of the two flows. If the loss $L(s,f)$ is identically zero, then SBTM dynamics exactly recover the \eqref{eqn: GF ODE} dynamics. Unlike most other interacting particle systems used for sampling \cite{liu2016stein, liu2017stein, corrales2025annealed}, in SBTM the particles $X^1_t,\dots,X^n_t$ interact via the neural network $s^{\Theta_t}$, which makes the memory and runtime scale linearly, and provides much better dimension scaling. 

\begin{remark}[Comparison with Langevin Dynamics]
    In equation \eqref{eqn: GF SDE}, also called Langevin Dynamics, particles $X_t^i$ are iid, whereas in SBTM dynamics the particles interact via the NN $s^{\Theta_t}$. Additionally, while particles in Langevin Dynamics undergo Brownian motion, resulting in continuous but non-differentiable trajectories, particles in SBTM have smooth trajectories. Thus, while the time- and particle-marginals of Langevin and SBTM dynamics coincide in the limit $L(s,f) \to 0$, the joint distributions are very different. 
\end{remark}


While the loss $L(s, f_t) = \E_{f_t} \|s - \score f_t\|^2$ is not tractable due to the score term $\score f_t$, for smooth $f$, it is equivalent to the loss $\E_{f_t} \|s\|^2 + 2\nabla \cdot s$ via integration by parts:
\begin{align}
    \E_f \|s - \score f\|^2 
    &= \E_f \left(\|s\|^2 - 2s \cdot \score f\right) + \E_f \|\score f\|^2 \tag*{\text{(expand $\|\cdot\|^2$)}}\\
    &= \E_f \left(\|s\|^2 + 2\nabla\cdot s\right) + \text{const}(s) \tag*{\text{(integrate by parts)}}.
\end{align}
The latter loss is tractable, and is readily approximated by evaluating on a sample $X^1,\dots,X^n$ from $f$:
\begin{align}
    \E_f \left(\|s\|^2 + 2\nabla\cdot s\right) = \frac{1}{n} \sum_{i=1}^n \left(\|s(X^i)\|^2 + 2\nabla\cdot s(X^i)\right) + \text{const}(s) \quad \text{if} \quad f = \frac{1}{n} \sum_{i=1}^n \delta_{X^i}.
\end{align}
While we use automatic differentiation to compute the divergence $\nabla \cdot s$ in the low-dimensional experiments, we employ Hutchinson's trace estimator \cite{hutchinson1989stochastic} $\E \left[\varepsilon^T J \varepsilon \right]$ where $\varepsilon$ follows the Rademacher distribution. This is because computing $\nabla \cdot s$ with autodiff takes $O(dC_s)$ time, where $C_s$ is the cost of one NN evaluation, whereas the Hutchinson's trace estimator takes $O(m C_s)$, where $m$ is the number of random projections.

\begin{remark}
    The neural network trains on the particles that were produced following the same neural network, as opposed to the true solution of the Wasserstein GF \eqref{eqn: GF ODE}. The commonsense intuition makes one suspect that local errors may accumulate exponentially. Miraculously, this does not happen, see Remark \ref{rem: local error does not accumulate}.
\end{remark}

For ease of reproducibility, we provide the pseudocode for SBTM:
\begin{pseudo}*
    \hd{SBTM}(f_0, \score \pi, n, T, \Delta t, \eta) \label{alg: SBTM}\\
    sample $\{X^i\}_{i=1}^n$ iid from $f_0$ \\
    initialize NN: $s^\Theta \approx \arg\min L(s, f_0)$ \\
    $t := 0$ \\
    while $t < T$\\+
        $t := t + \Delta t$\\
        for $k = 1,\dots,K$ \hspace{2cm} (flow $\Theta$) \\+
            $\Theta = \Theta - \eta\cdot \nabla_\Theta L(s^\Theta, f_t)$\\-
        for $i = 1,\dots, n$ \hspace{2cm} \ \ (flow $f$) \\+
            $X^i := X^i + \Delta t (\score \pi(X^i) - s^\Theta(X^i))$\\--
    output particle locations $X^1,\dots,X^n$
\end{pseudo}
In practice we use the Adamw optimizer for the gradient descent step and use the denoised score matching loss for high-dimensional experiments. To initialize the NN $s^\Theta \approx \arg\min L(s, f_0)$ we perform gradient descent on the standard MSE loss
\begin{align}
    \frac{1}{n} \sum_{i=1}^n \|s(X_0^i) - \score f_0(X_0^i)\|^2,
\end{align}
since we have access to the initial data $f_0$. For example, in most experiments below the initial data is the standard Gaussian, making the initial score $\score f_0(x) = -x$.

\section{Convergence analysis} \label{sec: convergence}

\textbf{How quickly does $f_t$ converge to $\pi$ in \eqref{eqn: SBTM}?} To answer this question we study the decay rate of the relative entropy $\KL{f_t}{\pi}$ \cite{chewi2023log}. Our strategy is to break down the change in $\KL{f_t}{\pi}$ into the change due to the density $f_t$ and the change due to the neural network $s^{\Theta_t}$, effectively splitting the coupled GF into two parts. Theorem \ref{thm: SBTM entropy dissipation} does the former, and Theorem \ref{thm: loss non-increasing} does the latter. Theorem \ref{thm: main result} brings both pieces together. Theorem \ref{thm: entropy dissipation in annealed dynamics} extends Theorem \ref{thm: SBTM entropy dissipation} to the annealed setting.

\subsection{Entropy dissipation}
First, recall the following classical calculation \cite{chewi2023log} that quantifies the rate of relative entropy decay for the Wasserstein GF on relative entropy:
\begin{theorem}\label{thm: entropy dissipation in W2 GF}
    If $f_t$ follows the Wasserstein GF on $\KL{\cdot}{\pi}$ then relative entropy dissipates at the rate of the relative Fisher information:
        \begin{equation}
            -\frac{\d}{\d t}\KL{f_t}{\pi} = \F{f_t}{\pi} := \E_{f_t}\left\|\score \frac{f_t}{\pi}\right\|^2.
        \end{equation}
    Additionally, if $\pi$ satisfies the log-Sobolev inequality with constant $\alpha$ (e.g. if $\pi$ is $\alpha$-log-concave) then
        \begin{equation}
            \KL{f_t}{\pi} \leq \KL{f_0}{\pi} e^{-\frac{1}{\alpha}t}.
        \end{equation}
\end{theorem}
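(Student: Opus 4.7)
The plan is to express the Wasserstein gradient flow on $\KL{\cdot}{\pi}$ as the continuity equation
\begin{equation}
    \partial_t f_t = \nabla \cdot \bigl(f_t \,\nabla \log (f_t/\pi)\bigr),
\end{equation}
which is just the Fokker--Planck form of \eqref{eqn: GF SDE} rewritten with the Stein score $\nabla\log(f_t/\pi)$ as the velocity. With this PDE in hand I would differentiate $\KL{f_t}{\pi} = \intr f_t \log(f_t/\pi)\,\mathrm{d}x$ in time and split the result into two terms by the product rule: $\intr \partial_t f_t \cdot \log(f_t/\pi)\,\mathrm{d}x$ and $\intr f_t \cdot \partial_t \log(f_t/\pi)\,\mathrm{d}x$. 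The second term equals $\intr \partial_t f_t\,\mathrm{d}x = \partial_t \intr f_t\,\mathrm{d}x = 0$ by mass conservation (note $\pi$ does not depend on $t$, so $\partial_t \log(f_t/\pi) = \partial_t \log f_t$ and $f_t \partial_t \log f_t = \partial_t f_t$).

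For the remaining term I would substitute the continuity equation and integrate by parts once, transferring the divergence onto $\log(f_t/\pi)$:
\begin{equation}
    \intr \nabla\cdot\bigl(f_t\,\nabla\log(f_t/\pi)\bigr)\,\log(f_t/\pi)\,\mathrm{d}x = -\intr f_t\,\bigl\|\nabla\log(f_t/\pi)\bigr\|^2\,\mathrm{d}x = -\F{f_t}{\pi},
\end{equation}
assuming enough decay at infinity for boundary terms to vanish. This yields the first displayed equation of the theorem.

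For the exponential decay, I would invoke the log-Sobolev inequality with constant $\alpha$, which in the paper's convention reads $\KL{f_t}{\pi} \leq \alpha\,\F{f_t}{\pi}$. Combined with the entropy dissipation identity just proved, this gives the differential inequality
\begin{equation}
    \frac{\mathrm{d}}{\mathrm{d}t}\KL{f_t}{\pi} = -\F{f_t}{\pi} \leq -\frac{1}{\alpha}\KL{f_t}{\pi}.
\end{equation}
Gronwall's inequality then yields the claimed bound $\KL{f_t}{\pi} \leq \KL{f_0}{\pi}\,e^{-t/\alpha}$.

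The main obstacle is not algebraic but analytic: justifying differentiation under the integral, the integration by parts, and the vanishing of boundary terms requires regularity and decay hypotheses on $f_t$ and $\pi$ (e.g.\ $f_t$ smooth and positive, $\log(f_t/\pi)$ in a suitable Sobolev space). Under the standing smoothness assumptions of the paper these are routine; rigorous treatments appear in \cite{jordan1998variational} and \cite{chewi2023log}, and I would cite them rather than reprove them. The $\alpha$-log-concavity remark reduces to the Bakry--\'Emery criterion, which I would invoke as a black box.
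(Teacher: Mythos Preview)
Your proposal is correct and matches the paper's approach. The paper does not give a standalone proof of this classical result but recovers it inside the proof of Theorem~\ref{thm: SBTM entropy dissipation} by setting $s_t=\nabla\log f_t$; your computation is exactly that special case (continuity equation, mass conservation kills the $\int f_t\,\partial_t\log f_t$ term, one integration by parts), with the log-Sobolev/Gronwall step for exponential decay spelled out where the paper simply cites \cite{chewi2023log}.
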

Intuitively, if $s_t \approx \score f_t$, SBTM should converge at the nearly optimal rate $-\frac{d}{dt}\KL{f_t}{\pi} = \F{f_t}{\pi}$. Indeed, this holds if the score matching loss is small:
\begin{theorem}[Small loss guarantees optimal entropy dissipation] \label{thm: SBTM entropy dissipation}
    If $f_t$ is the density of $X_t$, which follows 
    \begin{equation}
        \frac{\d X_t}{\d t} = \score \pi(X_t) - s_t(X_t), \quad X_0 \sim f_0,
    \end{equation}
    for any time-dependent vector field $s_t$, then
    \begin{align}
        -\frac{\d}{\d t}\KL{f_t}{\pi} 
        &=  \F{f_t}{\pi} - \E_{f_t}\angles{s_t - \score f_t}{\score \frac{f_t}{\pi}}\\
        &\geq \frac{1}{2}\F{f_t}{\pi} - \frac{1}{2}L(s_t, f_t). \label{eq: entropy dissipation bound}
    \end{align}
    In particular, if $L(s_t, f_t) \leq \frac{1}{2} \F{f_t}{\pi}$, then
    \begin{equation}
        -\frac{\d}{\d t}\KL{f_t}{\pi} \geq \frac{1}{4}\F{f_t}{\pi}.
    \end{equation}
\end{theorem}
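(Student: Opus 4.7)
The plan is to view the ODE as inducing a continuity equation for $f_t$, differentiate the relative entropy directly, and then control the resulting cross-term by a Cauchy--Schwarz/Young inequality. Since $X_t$ evolves deterministically with velocity field $v_t := \score \pi - s_t$, its law $f_t$ satisfies the continuity equation $\partial_t f_t + \nabla\cdot(f_t v_t) = 0$, provided there is enough regularity and decay at infinity on $f_t, s_t, \pi$ to justify the weak formulation and the subsequent integrations by parts. This is the only step where technical hypotheses are really needed; everything afterward is an algebraic identity plus one elementary inequality.

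Next, I would differentiate $\KL{f_t}{\pi} = \intr f_t \log(f_t/\pi)\d x$. Using $\intr \partial_t f_t \d x = 0$ and integrating by parts,
\begin{equation}
\frac{\d}{\d t}\KL{f_t}{\pi} = \intr \log\!\frac{f_t}{\pi}\, \partial_t f_t \d x = -\intr \log\!\frac{f_t}{\pi}\, \nabla\cdot(f_t v_t) \d x = \intr f_t\, v_t \cdot \score\frac{f_t}{\pi}\, \d x.
\end{equation}
The key algebraic observation is the splitting $v_t = -\score(f_t/\pi) - (s_t - \score f_t)$, which cleanly decomposes the drift into the ``ideal'' Wasserstein-gradient-flow part $-\score(f_t/\pi)$ (which would be present if $s_t$ were exactly $\score f_t$, as in Theorem \ref{thm: entropy dissipation in W2 GF}) and the score-matching error $s_t - \score f_t$. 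Plugging this decomposition into the previous display yields the stated identity: the ideal piece contributes $-\F{f_t}{\pi}$ and the error piece contributes the inner-product term.

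Finally, to get the lower bound I would apply Cauchy--Schwarz in $L^2(f_t)$ followed by the AM--GM/Young inequality $ab \leq \tfrac12 a^2 + \tfrac12 b^2$:
\begin{equation}
\left|\E_{f_t}\angles{s_t - \score f_t}{\score\frac{f_t}{\pi}}\right| \leq \sqrt{L(s_t, f_t)\, \F{f_t}{\pi}} \leq \tfrac{1}{2}L(s_t, f_t) + \tfrac{1}{2}\F{f_t}{\pi},
\end{equation}
which gives $-\frac{\d}{\d t}\KL{f_t}{\pi} \geq \F{f_t}{\pi} - \tfrac{1}{2}(L(s_t, f_t) + \F{f_t}{\pi}) = \tfrac{1}{2}\F{f_t}{\pi} - \tfrac{1}{2}L(s_t, f_t)$. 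The ``in particular'' clause is then an immediate substitution of $L(s_t, f_t) \leq \tfrac{1}{2}\F{f_t}{\pi}$. Conceptually the whole argument is nothing more than a controlled first-order perturbation of the exact gradient-flow computation behind Theorem \ref{thm: entropy dissipation in W2 GF}; the main (purely technical) obstacle is justifying the integration by parts, which is handled under the same implicit regularity/decay assumptions used there.
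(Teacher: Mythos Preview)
Your proposal is correct and matches the paper's proof essentially line for line: continuity equation for $f_t$, differentiate $\KL{f_t}{\pi}$ using $\intr \partial_t f_t\,\d x=0$ and integration by parts, split the drift by adding and subtracting $\score f_t$, then bound the cross term. The only cosmetic difference is that you route the last step through Cauchy--Schwarz followed by AM--GM, whereas the paper invokes Young's inequality directly on the inner product; the resulting bound is identical.
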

While Theorem \ref{thm: SBTM entropy dissipation} looks simple, it elucidates a remarkable property of SBTM:
\begin{remark}\label{rem: local error does not accumulate}
    Integrating \eqref{eq: entropy dissipation bound} in time, one obtains
    \begin{align}
        \KL{f_T}{\pi} \leq \KL{f_0}{\pi} -\frac{1}{2}\int_0^T \F{f_t}{\pi} \d t + \frac{1}{2} \int_0^T L(s_t, f_t) \d t.
    \end{align}
    Since there no exponential term $e^T$ in the bound, local errors do not accumulate. Importantly, $f_t$ is the law of $X_t$, where $X_t$ follows \eqref{eqn: SBTM}; it is not the (unknown) true solution of \eqref{eqn: GF ODE}.
\end{remark}

\begin{proof}[Proof of Theorem \ref{thm: SBTM entropy dissipation}]
    If $f_t$ is the density of $X_t$, where $X_t$ satisfies
    \begin{align}
        \frac{\d}{\d t} X_t = \score \pi(X_t) - s_t(X_t)
    \end{align}
    then $f_t$ satisfies the Fokker-Planck equation
    \begin{align}
        \partial_t f_t + \nabla \cdot (f_t (\score \pi - s_t)) = 0.
    \end{align}
    Thus, we may explicitly compute the relative entropy dissipation rate as 
    \begin{align}
        &-\frac{\d}{\d t}\intr f_t \log \frac{f_t}{\pi} \d x\\
        &=\intr \angles{s_t - \score \pi}{\score \frac{f_t}{\pi}} f_t \d x + \intr f_t \partial_t \log f_t \d x\\
        &= \intr \angles{s_t - \score \pi}{\score \frac{f_t}{\pi}} f_t \d x,
    \end{align}
    where we used the fact that $\frac{\d}{\d t}\intr f_t \d x = 0$ for the last equality.

    By taking $s_t = \score f_t$ exactly, we recover the proof of the classical result \ref{thm: entropy dissipation in W2 GF}. Otherwise, adding and subtracting $\score f_t$ in the integrand, we get
    \begin{align}
        -&\frac{\d}{\d t}\intr f_t \log \frac{f_t}{\pi} \d x\\
        &= \intr \|\score f_t - \score \pi\|^2 f_t \d x - \intr \angles{\score f_t - s_t}{\score f_t - \score \pi} f_t \d x\\
        &\geq \frac{1}{2}\intr \|\score f_t - \score \pi\|^2 f_t \d x - \frac{1}{2}\intr \|\score f_t - s_t\|^2 f_t \d x\\
        &= \frac{1}{2}\F{f_t}{\pi} - \frac{1}{2}L(s_t,f_t).
    \end{align}
The third line is by Young's inequality.
\end{proof}

\begin{remark}
The Young bound $\tfrac12\F{f_t}{\pi}-\tfrac12 L(s_t,f_t)$ is conservative.  
Using Cauchy--Schwarz instead of Young gives
\[
\Big|-\frac{\d}{\d t}\KL{f_t}{\pi} - \F{f_t}{\pi}\Big|
\le \sqrt{L(s_t,f_t)\,\F{f_t}{\pi}},
\]
so that $-\frac{\d}{\d t}\KL{f_t}{\pi}\to\F{f_t}{\pi}$ at rate $O(\sqrt{L})$ as $L\to 0$. In numerical experiments we observe $-\frac{\d}{\d t}\KL{f_t}{\pi} \approx \F{f_t}{\pi}$, indicating small loss.
\end{remark}

Now we show how to ensure that approximation $s_t \approx \score f_t$ remains accurate despite changing $f_t$. We achieve it by breaking up $\frac{\d}{\d t} L(s_t, f_t)$ into the change due to $s_t$ training and the change due to $f_t$ evolving. Informally speaking, the NN training needs to be faster than the evolution of $f_t$.

\subsection{Bounding score matching loss}
With sufficient size and amount of training a neural network can fit arbitrary data, including dynamically changing data. The next theorem shows that the score-matching loss does not increase, given sufficient training.

\begin{theorem}[Sufficient training guarantees bounded loss]\label{thm: loss non-increasing}
    Assume that $f_t$ is any time-dependent density, $X^1_t,\dots,X^n_t$ are distributed according to $f_t$, and the neural network $s_t = s^{\Theta_t}$ is trained with gradient descent
    \begin{align}
        \frac{\d \Theta_t}{\d t} &= -\eta \nabla_{\Theta_t} L^n(s_t, f_t), \quad L^n(s_t, f_t) := \frac{1}{n} \sum_{i=1}^n \|s(X_t^i) - \score f(X_t^i)\|^2
    \end{align}
    and is such that the Neural Tangent Kernel (NTK)
    \begin{align}
        H_{\alpha, \beta}^{i,j}(t) = \sum_{k=1}^N \nabla_{\theta_k} s_t^\alpha(X_t^i) \nabla_{\theta_k} s_t^\beta(X_t^j), \quad s(x)=(s^1(x),\dots,s^d(x))
    \end{align}    
    is lower bounded by $\lambda=\lambda(t)>0$, i.e. $\|Hv\|^2 \geq \lambda \|v\|^2$. Then as long as
    \begin{align}
        \eta(t) \geq \frac{n}{\lambda} \ddtauatt \log L^n(s_t, \f_\tau),
    \end{align}
    the loss $L^n(s_t, f_t)$ is non-increasing in time.
\end{theorem}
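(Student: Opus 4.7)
\textbf{Proof proposal for Theorem \ref{thm: loss non-increasing}.}

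The plan is to differentiate $t \mapsto L^n(s_t, f_t)$ by the chain rule, splitting the total derivative into a \emph{training} contribution (only $\Theta_t$ moves) and a \emph{tracking} contribution (only $f_t$ and the particles $X_t^i$ move):
\begin{align}
\ddttotal L^n(s_t, f_t) = \ddtauatt L^n(s_\tau, f_t) \;+\; \ddtauatt L^n(s_t, f_\tau).
\end{align}
The strategy is to show that the first term, which is automatically non-positive by gradient descent, is made large enough in magnitude by the NTK lower bound to dominate the second term under the hypothesized condition on $\eta$.

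For the training term I would substitute the gradient descent ODE:
\begin{align}
\ddtauatt L^n(s_\tau, f_t) = \angles{\nabla_\Theta L^n(s_t, f_t)}{\dot\Theta_t} = -\eta \|\nabla_\Theta L^n(s_t, f_t)\|^2.
\end{align}
To expose the NTK, introduce the residual $r^i_\alpha := s_t^\alpha(X_t^i) - \partial_\alpha \log f_t(X_t^i)$, so that $L^n(s_t, f_t) = \tfrac{1}{n}\|r\|^2$ and $\nabla_{\theta_k} L^n = \tfrac{2}{n}\sum_{i,\alpha} r^i_\alpha \nabla_{\theta_k} s_t^\alpha(X_t^i)$. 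Squaring and summing over $k$ collapses to the quadratic form $\|\nabla_\Theta L^n\|^2 = \tfrac{4}{n^2}\, r^T H(t)\, r$, and the NTK lower bound $H \succeq \lambda I$ gives $\|\nabla_\Theta L^n\|^2 \geq \tfrac{4\lambda}{n^2}\|r\|^2 = \tfrac{4\lambda}{n} L^n(s_t, f_t)$. Hence
\begin{align}
\ddtauatt L^n(s_\tau, f_t) \;\leq\; -\tfrac{4\lambda\eta}{n}\, L^n(s_t, f_t).
\end{align}

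For the tracking term I would use $\ddtauatt L^n(s_t, f_\tau) = L^n(s_t, f_t)\cdot \ddtauatt \log L^n(s_t, f_\tau)$, whereupon the hypothesis $\eta \geq \frac{n}{\lambda}\ddtauatt \log L^n(s_t, f_\tau)$ yields $\ddtauatt L^n(s_t, f_\tau) \leq \tfrac{\lambda\eta}{n} L^n(s_t, f_t)$, which is strictly dominated by the training rate (leaving a factor-of-four slack). Adding the two contributions produces $\ddttotal L^n(s_t, f_t) \leq -\tfrac{3\lambda\eta}{n}\,L^n(s_t, f_t) \leq 0$, which is the claim.

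The routine steps are the algebraic expansion of $\nabla_\Theta L^n$ and the quadratic-form identity; the conceptually central step is the chain-rule split, which reduces the coupled flow to a race between the NTK-driven training rate and the tracking rate $\ddtauatt \log L^n(s_t, f_\tau)$. The main obstacle I anticipate is giving precise meaning to the tracking partial derivative: it implicitly requires enough regularity of $\tau \mapsto (X^i_\tau, \nabla \log f_\tau)$ to differentiate the score and the particle positions through $L^n$, which I would treat as part of the standing regularity setting rather than prove from the SBTM dynamics in isolation.
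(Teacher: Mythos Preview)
Your proposal is correct and follows essentially the same route as the paper: chain-rule split into training and tracking terms, rewriting the training term as an NTK quadratic form in the residuals, applying the NTK lower bound, and comparing against the tracking rate via the hypothesis on $\eta$. The only cosmetic difference is that you retain the factor $2$ from differentiating $\|\cdot\|^2$, yielding $-\tfrac{4\lambda\eta}{n}L^n$ and hence a factor-of-four slack, whereas the paper's bookkeeping drops that constant and lands on $-\tfrac{\lambda\eta}{n}L^n$, making the condition on $\eta$ look tight; your version is the more careful of the two.
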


\begin{remark}
    The assumption that the NTK is lower bounded in Theorem \ref{thm: loss non-increasing} is non-trivial, but holds under relatively mild assumptions \cite{karhadkar2024bounds}; the most restrictive one is that the NN size is superlinear in the number of data. While the current state-of-the-art NTK theory cannot handle infinite training data regime $n\to \infty$ with finite NN size, as better NTK theory becomes available, the assumptions in Theorem \ref{thm: loss non-increasing} can be weakened.
\end{remark}

\begin{proof}[Proof of Theorem \ref{thm: loss non-increasing}]
We start with an elementary computation based on chain rule.

\begin{align}
    \frac{\d}{\d t} L(s_t, f_t) &= \ddtauatt L(s_\tau, \f_t) + \ddtauatt L(s_t, \f_\tau)\\
    \ddtauatt L(s_\tau, \f_t)   &= \nabla_\Theta L \cdot \frac{\d}{\d t} \Theta\\
    &= -\eta \sum_{k=1}^N (\nabla_{\theta_k} L)^2\\
    &= -\frac{\eta}{n^2} \sum_{k=1}^N \left( [s(X^i) - \score f(X^i)]\cdot \nabla_{\theta_k}s(X^i) \right)^2\\
    &= -\frac{\eta}{n^2} \sum_{i,j=1}^n \sum_{\alpha,\beta=1}^d [s_\alpha(X^i) - \nabla_\alpha \log f(X^i)] H_{\alpha, \beta}^{i,j} [s_\beta(X^j) - \nabla_\beta \log f(X^j)] \\
    &= -\frac{\eta}{n^2} \|s - \score f\|_H^2.
\end{align}
where
\begin{align}
    H_{\alpha, \beta}^{i,j}=\sum_{k=1}^N \nabla_{\theta_k} s_\alpha(X^i) \nabla_{\theta_k} s_\beta(X^j)
\end{align}
is called the Neural Tangent Kernel. Its lowest eigenvalue determines the convergence speed of gradient descent. If $\|Hv\|^2 \geq \lambda \|v\|^2$, then
\begin{align}
    \frac{\d}{\d t} L(s_t, f_t) 
    &= \ddtauatt L(s_\tau, \f_t) + \ddtauatt L(s_t, \f_\tau) \\
    &\leq -\frac{\eta \lambda}{n} L(s_t, f_t) + \ddtauatt L(s_t, \f_\tau).
\end{align}
Thus, the loss is non-increasing if
\begin{align}
    \eta(t) \geq \frac{n}{\lambda} \ddtauatt \log L(s_t, \f_\tau).
\end{align}
\end{proof}

Combining Theorems \ref{thm: entropy dissipation in W2 GF} and \ref{thm: loss non-increasing} shows that the convergence rate of SBTM matches the convergence rate of the true GF \eqref{eqn: GF ODE}. This is the main theoretical guarantee of our sampling method.
\begin{theorem}\label{thm: main result}
    Suppose that $X_t$ and $\Theta_t$ follow \eqref{eqn: SBTM} and $\eta$ and $H$ are as in Theorem \ref{thm: loss non-increasing}. If the initial loss is small and the true loss $L$ is well-approximated by the training loss $L^n$
    \begin{align}\label{eqn: initial loss is small}
        L^n(s_0, f_0) \leq \frac{1}{4}\varepsilon, \quad \varepsilon := \inf_{t \leq T} \F{f_t}{\pi},\\
        |L^n(s_t, f_t) - L(s_t, f_t)| \leq \frac{1}{4}\varepsilon, \label{eqn: true loss well-approximated by training loss}
    \end{align}
    then relative entropy dissipates at the optimal rate:
    \begin{align}
        -\frac{\d}{\d t}\KL{f_t}{\pi} \geq \frac{1}{4}\F{f_t}{\pi}.
    \end{align}
    If $\pi$ satisfies the log-Sobolev inequality with constant $\alpha$ then convergence is exponential:
        \begin{equation}
            \KL{f_t}{\pi} \leq \KL{f_0}{\pi} e^{-\frac{1}{4\alpha}t}.
        \end{equation}
\end{theorem}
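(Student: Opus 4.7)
The plan is to synthesize the two main building blocks of the section: Theorem \ref{thm: loss non-increasing}, which keeps the empirical score-matching loss $L^n$ under control along the coupled flow, and Theorem \ref{thm: SBTM entropy dissipation}, which converts smallness of the \emph{true} loss $L$ into a quantitative entropy dissipation bound. The hypotheses \eqref{eqn: initial loss is small} and \eqref{eqn: true loss well-approximated by training loss} play exactly the role of bridging the empirical and population quantities.

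First I would invoke Theorem \ref{thm: loss non-increasing}. Since its hypotheses on $\eta$ and on the NTK lower bound are assumed, $L^n(s_t, f_t)$ is non-increasing in $t$, so together with \eqref{eqn: initial loss is small} one obtains $L^n(s_t, f_t) \leq L^n(s_0, f_0) \leq \tfrac{1}{4}\varepsilon$ for all $t \in [0,T]$. The triangle inequality combined with \eqref{eqn: true loss well-approximated by training loss} then gives
\begin{align}
L(s_t, f_t) \leq L^n(s_t, f_t) + \bigl| L(s_t, f_t) - L^n(s_t, f_t) \bigr| \leq \tfrac{1}{2}\varepsilon \leq \tfrac{1}{2}\F{f_t}{\pi},
\end{align}
where the last inequality uses the definition $\varepsilon := \inf_{t \leq T}\F{f_t}{\pi}$. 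This is precisely the hypothesis of the final assertion of Theorem \ref{thm: SBTM entropy dissipation}, which then yields $-\tfrac{\d}{\d t}\KL{f_t}{\pi} \geq \tfrac{1}{4}\F{f_t}{\pi}$, the first conclusion.

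For the exponential decay, I would feed the log-Sobolev inequality in the same convention as Theorem \ref{thm: entropy dissipation in W2 GF}, namely $\F{f_t}{\pi} \geq \tfrac{1}{\alpha}\KL{f_t}{\pi}$, into the dissipation estimate just obtained. This produces the differential inequality $\tfrac{\d}{\d t}\KL{f_t}{\pi} \leq -\tfrac{1}{4\alpha}\KL{f_t}{\pi}$, and a one-line Grönwall argument yields $\KL{f_t}{\pi} \leq \KL{f_0}{\pi}\, e^{-t/(4\alpha)}$, exactly as claimed.

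The proof itself is a short combination of the two previous theorems; the substantive difficulty is concentrated in the hypothesis \eqref{eqn: true loss well-approximated by training loss}. This is a \emph{uniform-in-time} generalization bound between the true score-matching loss and its empirical counterpart along a sample whose particles are strongly coupled through the time-evolving network $s^{\Theta_t}$, so standard iid Monte Carlo estimates do not apply directly. Establishing it quantitatively, and likewise verifying the NTK lower bound throughout $[0,T]$, is where the bulk of the analytic work would have to be done; the theorem as stated cleanly isolates these two ingredients as assumptions so that the dissipation argument can be read off from the coupled-flow structure.
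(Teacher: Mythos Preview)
Your proposal is correct and is precisely the argument the paper intends: the paper does not write out a separate proof of Theorem \ref{thm: main result} but simply states that it follows by ``combining'' the preceding results, and your write-up is exactly that combination---Theorem \ref{thm: loss non-increasing} to keep $L^n$ small, assumption \eqref{eqn: true loss well-approximated by training loss} to pass to $L$, the last clause of Theorem \ref{thm: SBTM entropy dissipation} to obtain the $\tfrac14\F{f_t}{\pi}$ dissipation, and then LSI plus Gr\"onwall for the exponential rate. Your closing paragraph on where the real difficulty lies (the uniform-in-$t$ generalization bound \eqref{eqn: true loss well-approximated by training loss} and the NTK lower bound) also matches the paper's own discussion in Remark \ref{rem: assumptions of main theorem} and the remark following Theorem \ref{thm: loss non-increasing}.
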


\begin{remark}\label{rem: assumptions of main theorem}
    Since $s_t \approx \score f_t$, relative Fisher information may be approximated by
    \begin{align}
        \F{f_t}{\pi} \approx \Fn{f_t}{\pi} := \frac{1}{n} \sum_{i=1}^n \| s(X^i) - \score \pi(X^i) \|^2.
    \end{align}
    One may stop the sampling when $\Fn{f_t}{\pi} \leq \varepsilon$ for some predefined $\varepsilon$. This makes condition \eqref{eqn: initial loss is small} practical.

    If particles $X^1(t),\dots, X^n(t)$ were independent, the Law of Large Numbers would imply
    \begin{align}
        \lim_{n \to \infty} |L^n(s_t, f_t) - L(s_t, f_t)| = 0,
    \end{align}
    satisfying condition \eqref{eqn: true loss well-approximated by training loss} for large enough $n$. Further, for interacting particle systems propagation of chaos often holds, namely that in the limit $n\to \infty$ particles $X_t^i$ become independent. In numerical experiments, we treat $L^n$ as $L$, and confirm the optimal rate of relative entropy dissipation.
\end{remark}

\subsection{Annealed dynamics}\label{sec: annealing}
Classical sampling can be broken down into three distinct parts: mode discovery, mode weighting, and mode approximation.
While Langevin dynamics performs mode approximation very fast, both mode discovery and mode weighting are challenging. For example, in the absence of samples from $\pi$, the mere discovery of a mode with support of small width $h$ in $d$ dimensions requires $\Omega\left(h^{-d}\right)$ function evaluations \cite{he2025query}. Empirically, it helps to pick an annealing between $f_0$ and $\pi$ to ``guide'' $f_t$, similar to how the forward OU process guides the reverse process in DGM. We use the geometric annealing $\pi_t \propto f_0^{1-t} \pi^t$ \cite{neal2001annealed, mate2023learning, chemseddine2024neural} and the dilation annealing $\pi_t(x) \propto \pi(x/t)$ \cite{chehab2024practical}.


One obtains a similar entropy dissipation estimate in annealed dynamics. Taking $\pi_t = \pi$ recovers Theorem \ref{thm: SBTM entropy dissipation}.
\begin{theorem}\label{thm: entropy dissipation in annealed dynamics}
    If $\pi_t$ is any time-dependent density, $s_t$ any time-dependent vector field, and
    \begin{align}
        \frac{\d X_t}{\d t} &= \score \pi_t(X_t) - s(X_t),
    \end{align}
    then
    \begin{align}
        -\frac{\d}{\d t}\KL{f_t}{\pi} 
        =  \E_{f_t} \angles{s_t - \score \pi_t}{\score \frac{f_t}{\pi}}.
    \end{align}
    In particular, as $L(s_t,f_t) \to 0$,
    \begin{align}\label{eqn: entropy dissipation in annealed dynamics, assuming zero loss}
        -\frac{\d}{\d t}\KL{f_t}{\pi} 
        \to \E_{f_t}\angles{\score \frac{f_t}{\pi_t}}{\score \frac{f_t}{\pi}}
    \end{align}
    at the rate $O(\sqrt{L(s_t,f_t)})$.
\end{theorem}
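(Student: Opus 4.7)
The plan is to follow the same template as the proof of Theorem \ref{thm: SBTM entropy dissipation}, the only change being that the velocity field in the ODE is now $\score \pi_t - s_t$ instead of $\score \pi - s_t$, while the functional whose dissipation I track remains $\KL{f_t}{\pi}$ with the fixed terminal target $\pi$ (not $\pi_t$). Since the dynamics is still a pure transport ODE, the law $f_t$ satisfies the continuity equation
\[
\partial_t f_t + \nabla \cdot \bigl(f_t (\score \pi_t - s_t)\bigr) = 0,
\]
and I would proceed as before by differentiating $\intr f_t \log \frac{f_t}{\pi} \d x$ in time.

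First, I would split the time derivative into the $\partial_t f_t$ piece and the $f_t\,\partial_t \log f_t$ piece. The latter collapses to $\ddttotal \intr f_t \d x = 0$ by mass conservation. The former, after substituting the continuity equation and integrating by parts, produces
\[
-\ddttotal \KL{f_t}{\pi} = \intr f_t \angles{s_t - \score \pi_t}{\score \frac{f_t}{\pi}} \d x = \E_{f_t}\angles{s_t - \score \pi_t}{\score \tfrac{f_t}{\pi}},
\]
which is the first claimed identity. The point to flag explicitly is that $\score \pi_t$ and $\score \frac{f_t}{\pi}$ now reference different densities ($\pi_t$ versus $\pi$), so the clean cancellation that produces the Fisher information in the time-independent case no longer happens automatically.

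For the asymptotic identity \eqref{eqn: entropy dissipation in annealed dynamics, assuming zero loss}, I would add and subtract $\score f_t$ inside the first slot of the inner product to get
\[
-\ddttotal \KL{f_t}{\pi} = \E_{f_t}\angles{\score \tfrac{f_t}{\pi_t}}{\score \tfrac{f_t}{\pi}} + \E_{f_t}\angles{s_t - \score f_t}{\score \tfrac{f_t}{\pi}},
\]
and bound the second term by Cauchy--Schwarz, obtaining the estimate $\sqrt{L(s_t,f_t)\,\F{f_t}{\pi}}$ in place of Young's inequality (since here I need rate, not a one-sided bound). This gives convergence to the limiting expression at rate $O(\sqrt{L(s_t,f_t)})$. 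There is no substantive obstacle: the entire argument is a one-page adaptation of Theorem \ref{thm: SBTM entropy dissipation}, and replacing $\pi$ by $\pi_t$ in the velocity field leaves the integration-by-parts step untouched since it only uses the divergence structure coming from the continuity equation.
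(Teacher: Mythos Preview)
Your proposal is correct and essentially identical to the paper's proof: both derive the continuity equation, differentiate $\KL{f_t}{\pi}$ with the $f_t\,\partial_t\log f_t$ term vanishing by mass conservation, then add and subtract $\score f_t$ and apply Cauchy--Schwarz to get the $\sqrt{L(s_t,f_t)\,\F{f_t}{\pi}}$ bound. Your write-up is in fact slightly cleaner than the paper's, which inserts a superfluous Jensen step before Cauchy--Schwarz.
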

Theorem \ref{thm: entropy dissipation in annealed dynamics} gives a way to test for $L(s_t,f_t)=0$ by testing equality \eqref{eqn: entropy dissipation in annealed dynamics, assuming zero loss}. This is important, because the true loss $L(s_t, f_t) = \E_f \|s - \score f\|^2$ cannot be computed from a finite sample $X^1_t,\dots,X^n_t$ of $f_t$ since $\score f_t$ is unknown. Empirically, we confirm that \eqref{eqn: entropy dissipation in annealed dynamics, assuming zero loss} holds, indicating small loss -- see Figure \ref{fig: gaussians far 2d annealed entropy dissipation}.

\begin{proof}
    If $f_t$ is the density of $X_t$, where $X_t$ satisfies
    \begin{align}
        \frac{\d}{\d t} X_t = \score \pi_t(X_t) - s_t(X_t),
    \end{align}
    then $f_t$ satisfies the Fokker-Planck equation
    \begin{align}
        \partial_t f_t + \nabla \cdot (f_t (\score \pi_t - s_t)) = 0.
    \end{align}
    Thus, we may explicitly compute the relative entropy dissipation rate as 
    \begin{align}
        &-\frac{\d}{\d t}\intr f_t \log \frac{f_t}{\pi} \d x\\
        &=\intr \angles{s_t - \score \pi_t}{\score \frac{f_t}{\pi}} f_t \d x + \intr f_t \partial_t \log f_t \d x\\
        &= \intr \angles{s_t - \score \pi_t}{\score \frac{f_t}{\pi}} f_t \d x,
    \end{align}
where we used the fact that $\intr f_t \partial_t \log f_t \d x$ is zero for the last equality. Now, 
\begin{align}
    &\left|\intr \angles{s_t - \score \pi_t}{\score \frac{f_t}{\pi}} f_t \d x - \intr \angles{\score \frac{f_t}{\pi_t}}{\score \frac{f_t}{\pi}} f_t \d x\right|^2\\
    &\leq \intr \left|\angles{s_t - \score f_t}{\score \frac{f_t}{\pi}}\right|^2 f_t \d x\\
    &\leq \intr \left| s_t - \score f_t \right|^2 f_t \d x \intr \left| \score f_t - \score \pi \right|^2 f_t \d x \\
    &= L(s_t, f_t) \F{f_t}{\pi}.
\end{align}
Thus, $-\frac{\d}{\d t}\intr f_t \log \frac{f_t}{\pi} \d x \to \E_{f_t} \angles{\score \frac{f_t}{\pi_t}}{\score \frac{f_t}{\pi}}$ as $L \to 0$, at the rate $O(\sqrt{L})$.

\end{proof}

\section{Experiments}\label{sec: experiments}

We present several low- and high-dimensional experiments, comparing SBTM to its stochastic counterpart \eqref{eqn: GF SDE}, and SVGD. The examples differ in the log-concavity of the target, the number of modes, the relative mode weighting, the dimension and the annealing path used. We do not extensively compare SBTM to SVGD \cite{liu2016stein} because without additional tricks such as momentum and cherry-picking the kernel bandwidth we were unable to obtain decent performance from SVGD even in low dimensions, see Tables \ref{tab:kl_analytic} and \ref{tab:kl_gaussians_near}. We emphasize that while the numerical experiments presented below would be trivial in the context of DGM, they are quite challenging in the absence of samples from $\pi$, only using the score $\score \pi$.

In the low-dimensional experiments we confirm that, unlike \eqref{eqn: GF SDE}, the \eqref{eqn: SBTM} dynamics produce smooth trajectories of the relevant metrics: relative entropy, relative entropy dissipation rate, L2 error. Moreover, we verify that the relative entropy dissipation rate almost exactly matches the approximated relative Fisher information, indicating good score approximation and optimal rate of convergence, empirically confirming the validity of Theorems \ref{thm: main result} and \ref{thm: entropy dissipation in annealed dynamics} and estimate \eqref{eqn: entropy dissipation in annealed dynamics, assuming zero loss}, and showcasing the usefulness of having access to the time-varying score $\score f_t$. Finally, the high-dimensional experiment confirms the scaling of SBTM to high dimensions and demonstrates that even in 784 dimensions the score approximation is accurate and SBTM retains exploration. Additionally, SBTM exhibits somewhat better sample efficiency, likely due to particle dependence, and demonstrates early mode splitting as shown in Experiment \ref{exp: Gaussian mixture with dilation annealing}.

We use a three-layer Residual Neural Network of width 128 for 1D experiments, and increase it to five layers for 2D experiments. For the MNIST experiment we use an 8-layer U-Net. For all experiments we use mini-batch gradient descent with the Adamw optimizer. For low-dimensional experiments we use learning rate $5\cdot 10^{-4}$, batch size 400, and 10 gradient descent steps per simulation step. For the MNIST experiment we use learning rate $10^{-3}$ and batch size 64. These hyperameters were found empirically. All experiments were done in JAX on a single TITAN X Pascal 12GB GPU. The runtime of the longest experiment is 90 minutes. The code repository is \href{https://github.com/Vilin97/SBTM-sampling/tree/main}{Vilin97/SBTM-sampling}.

\subsection{Log-concave target}\label{exp: Log-concave target}
\begin{figure}[htp!]
    \centering
    \includegraphics[width=0.6\linewidth]{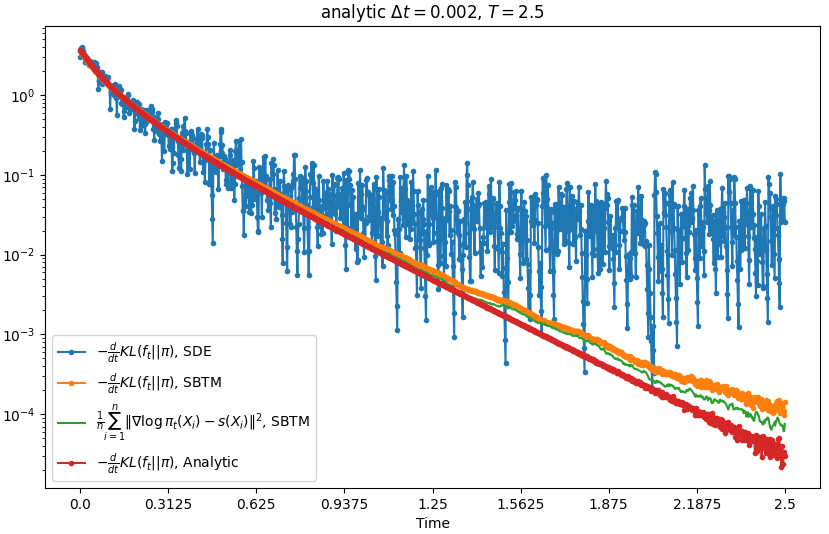}
    \includegraphics[width=0.51\linewidth]{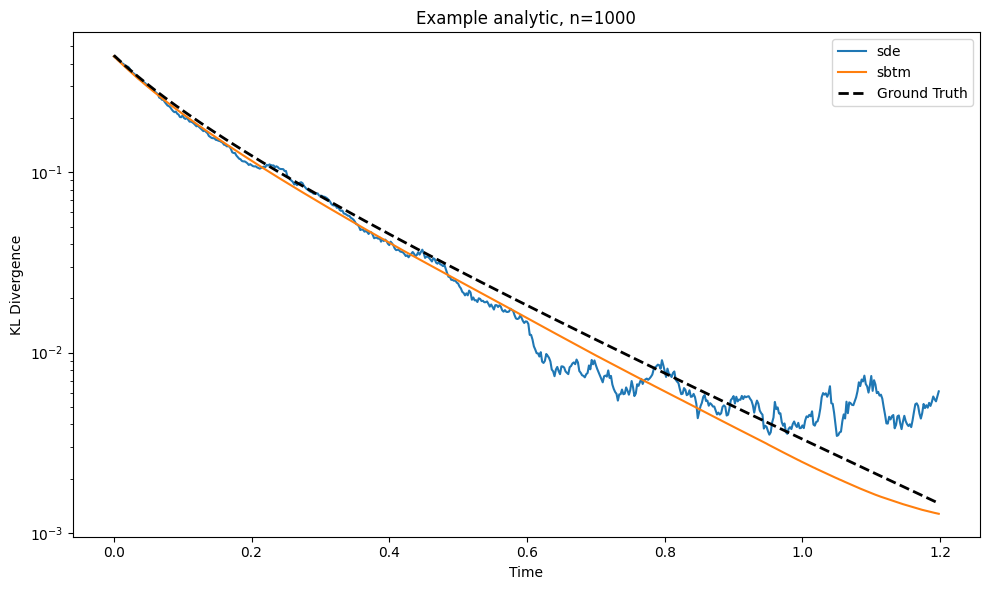}
    \includegraphics[width=0.47\linewidth]{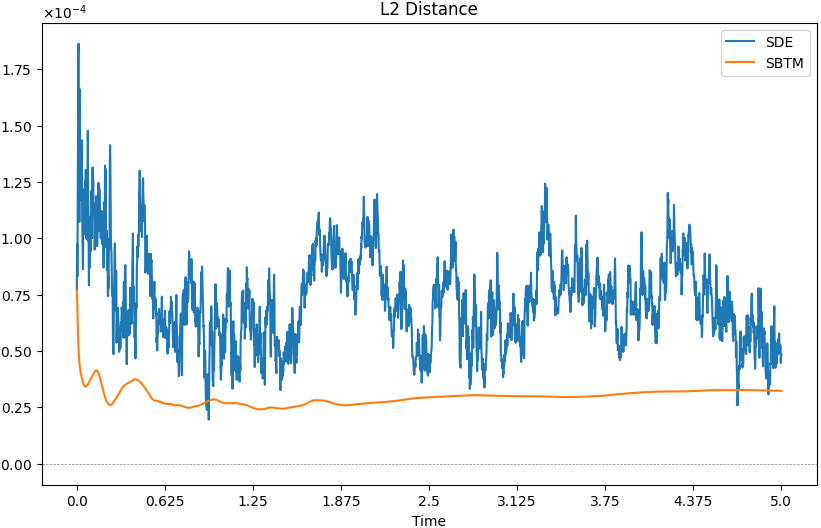}
    \caption{Experiment \ref{exp: Log-concave target}, log-concave target. Top: relative entropy dissipation rate of SBTM (ours) and SDE (stochastic). SBTM approximates entropy decay rate well, while SDE is noisy.
    Bottom left: relative entropy of SBTM, SDE and the ground truth. SBTM approximates the ground truth well.
    Bottom right: L2 error to the true ground truth solution. SBTM produces lower error with smoother trajectory.}
    \label{fig: analytic}
\end{figure}

\begin{wrapfigure}{r}{0.6\textwidth}
\vspace{-2.5em}
\captionsetup{type=table}
\centering
{\footnotesize
\caption{KL divergence ($\downarrow$) between the sample and the target
$\pi = \mathcal{N}(0,1)$, using time step $0.002$ and final time $2.5$. SBTM exhibits better sample efficiency, likely due to determinism.}
\label{tab:kl_analytic}
\begin{tabular}{lccccc}
& \multicolumn{5}{c}{sample size} \\
Method & 100 & 300 & 1000 & 3000 & 10000 \\
\midrule
SBTM (ours) & \textbf{0.013} & \textbf{0.0032} & \textbf{0.0019} & \textbf{0.0020} & \textbf{0.00099} \\
SDE         & 0.020 & 0.022  & 0.0094 & 0.0036 & 0.0012 \\
SVGD        & 0.33  & 0.23   & 0.16   & 0.20   & 0.29   \\
\bottomrule
\end{tabular}
}
\end{wrapfigure}
First, we consider an example that admits an analytic solution $f_t = \N\left(0,1 - e^{-2(t+0.1)}\right)$, which allows us to compute the true entropy dissipation and the $L^2$ distance to the true solution. We use sample size $n=1000$. SBTM exhibits the optimal relative entropy dissipation rate, as evidenced by the close alignment of the orange, green, and red lines in the left panel of Figure \ref{fig: analytic}. The deterministic nature of SBTM allows it to achieve better sample efficiency than the noisy SDE, as shown in Table \ref{tab:kl_analytic}. While the marginal density of $X^i_t$ is the same between SBTM and SDE for a fixed $i$ and $t$, the whole particle ensemble $X^1_t,\dots,X^1_t$ has a different distribution, as the particles in SBTM interact via the NN. More detailed investigation of the ensemble properties of SBTM is left as future work.

\subsection{Gaussian mixture}\label{exp: Gaussian mixture 1D}

\begin{figure}[htp!]
    \centering
    \includegraphics[width=0.49\linewidth]{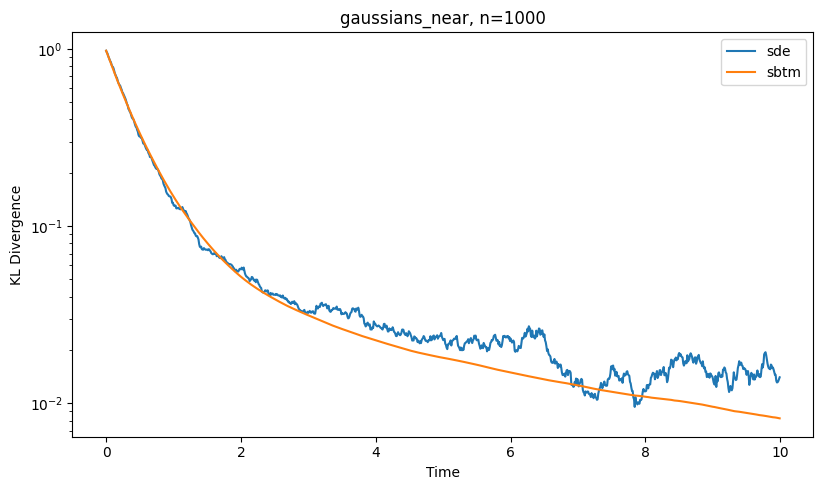}
    \includegraphics[width=0.47\linewidth]{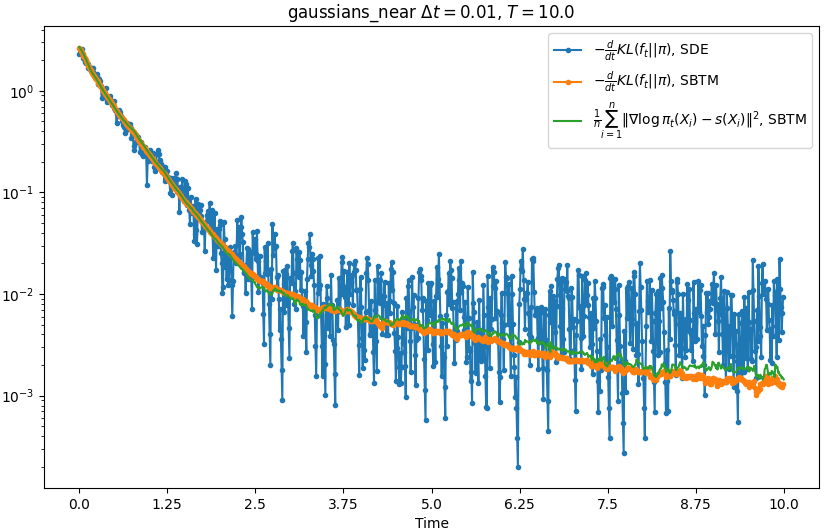}
    \caption{Experiment \ref{exp: Gaussian mixture 1D}, 1D Gaussian mixture. Left: KL divergence of SBTM (ours) and SDE (stochastic) over time. SBTM exhibits smoother convergence. Right: entropy dissipation of SBTM and SDE. SBTM approximates entropy decay rate perfectly with the computable quantity $\F{f_t}{\pi}$, while SDE is noisy.}
    \label{fig: gaussians near 1d}
\end{figure}

\begin{wrapfigure}{r}{0.6\textwidth}
\captionsetup{type=table}
\centering
{\footnotesize
\caption{KL divergence ($\downarrow$) between the sample and the target
$\pi = \frac{1}{4}\mathcal{N}(-2,1) + \frac{3}{4}\mathcal{N}(2,1)$, using time step 0.01, and final time 10. Top: non-annealed. Bottom: 100 particles.}
\label{tab:kl_gaussians_near}
\begin{tabular}{lccccc}
& \multicolumn{5}{c}{sample size} \\
& 100 & 300 & 1000 & 3000 & 10000 \\
\midrule
SBTM (ours) & \textbf{0.022} & 0.018 & \textbf{0.0082} & 0.0082 & \textbf{0.0036} \\
SDE         & 0.029 & \textbf{0.013} & 0.014 & \textbf{0.0068} & 0.0043 \\
SVGD        & 2.8   & 1.4   & 2.4   & 2.1   & 2.0   \\
\bottomrule
\end{tabular}
\vspace{0.6em}
\begin{tabular}{cccc}
& \multicolumn{3}{c}{annealing} \\
& Non-annealed & Geometric & Dilation \\
\midrule
SBTM (ours) & \textbf{0.022} & \textbf{0.058} & \textbf{0.037} \\
SDE & 0.029 & 0.060 & 0.062 \\
SVGD & 2.800 & 0.470 & 0.480 \\
\bottomrule
\end{tabular}
}
\vspace{-1em}
\end{wrapfigure}

This and further examples do not admit analytic solutions, but we can still compare the quality of the final sample as well as compare the trajectories of the SDE and SBTM. We use $f_0 = \mathcal{N}(0,1)$ here and in further experiments. Here we sample from the Gaussian mixture $\pi = \frac{1}{4}\mathcal{N}(-2,1) + \frac{3}{4}\mathcal{N}(2,1)$ with $n=1000$. As evidenced by the change of slope at $t=2.5$ in Figure~\ref{fig: gaussians near 1d}, the underlying Markov chain enters metastability, which plagues the convergence to non-log-concave targets. Here the non-log-concavity is mild, so the process still converges in a reasonable time frame. Table \ref{tab:kl_gaussians_near} shows that SBTM achieves competitive KL divergence.

\subsection{Gaussian mixture with geometric annealing}\label{exp: gaussian mixture geometric annealing}
In this example we sample from the well-separated Gaussian mixture $\pi = \frac{1}{4}\mathcal{N}(-4,1) + \frac{3}{4}\mathcal{N}(4,1)$ with $n=1000$. Because the target is extremely non-log-concave we employ geometric annealing, which linearly interpolates the score between the initial and the target densities: $\score \pi_t = (1-t) \score f_0 + t \score \pi$. Figure \ref{fig: gaussians far 1d} shows that estimate \eqref{eqn: entropy dissipation in annealed dynamics, assuming zero loss} holds empirically almost perfectly, indicating small score-matching loss. The imperfect density reconstruction on the left panel of Figure \ref{fig: gaussians far 1d} is due to the meta-stability issue, which is prevalent even in annealed dynamics; there is no known tractable dynamics without teleportation of mass that perform well on this example.

\begin{figure}[h]
    \centering
    \vspace{0.5em}
    \includegraphics[width=0.49\linewidth]{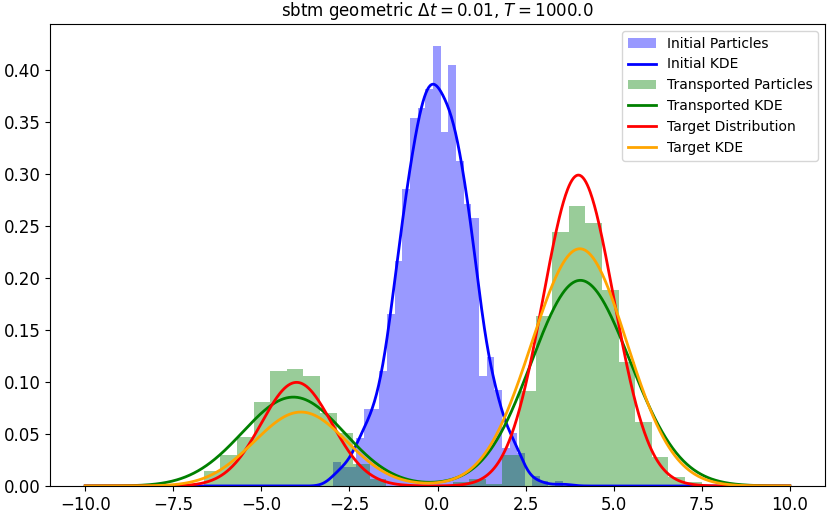}
    \includegraphics[width=0.47\linewidth]{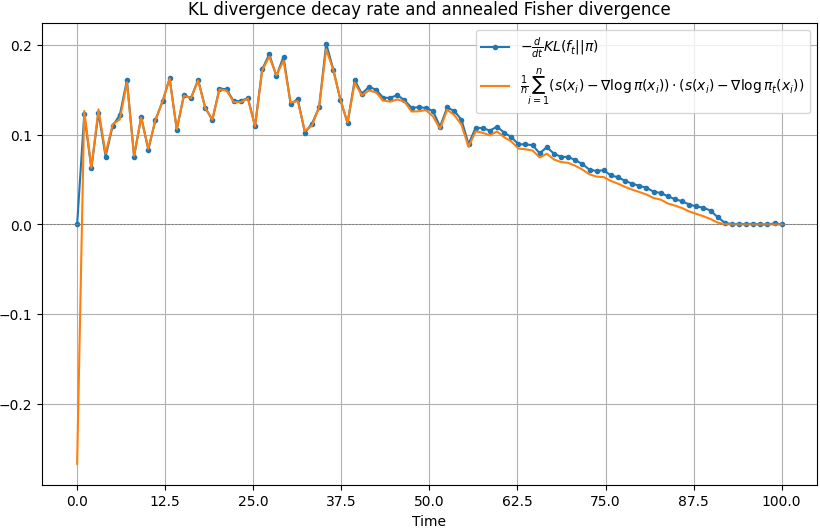}
    \caption{Experiment \ref{exp: gaussian mixture geometric annealing}, well-separated 1D Gaussian mixture. Left: reconstructed density of SBTM. It approximates the solution well despite the non-log-concavity. Right: entropy dissipation of SBTM (ours) and SDE (stochastic). SBTM approximates entropy decay rate perfectly even in annealed dynamics.}
    \label{fig: gaussians far 1d}
\end{figure}

\subsection{Noisy Circle}\label{exp: noisy circle}
\begin{figure}[htp!]
    \centering
    \vspace{1em}
    \includegraphics[width=0.32\linewidth]{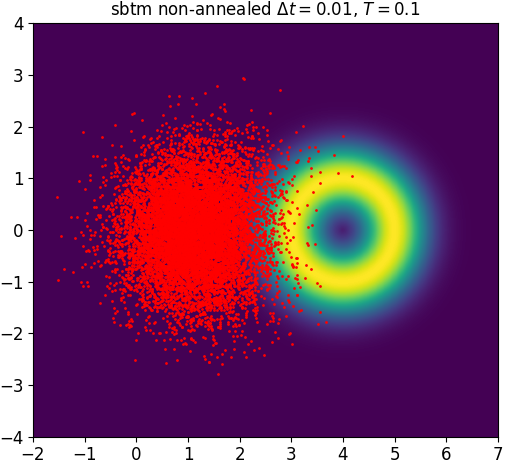}
    \includegraphics[width=0.32\linewidth]{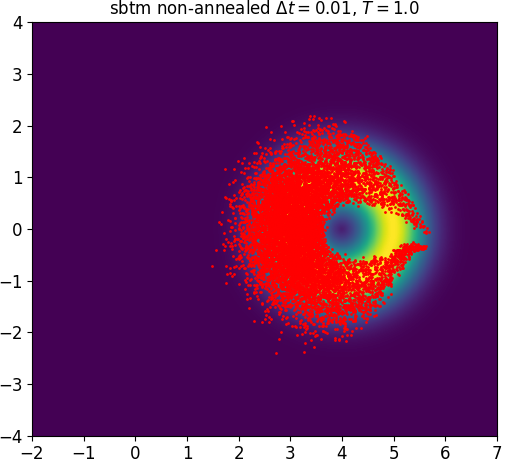}
    \includegraphics[width=0.32\linewidth]{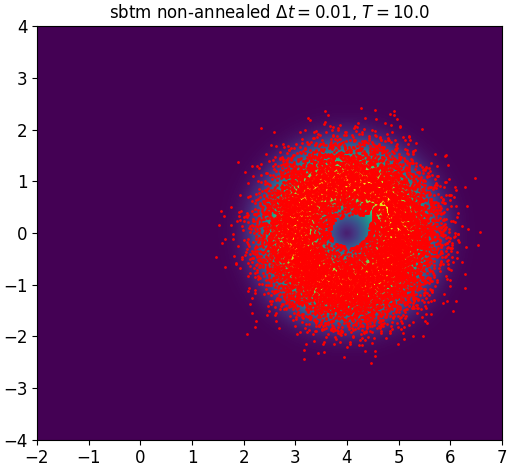}
    \includegraphics[width=0.32\linewidth]{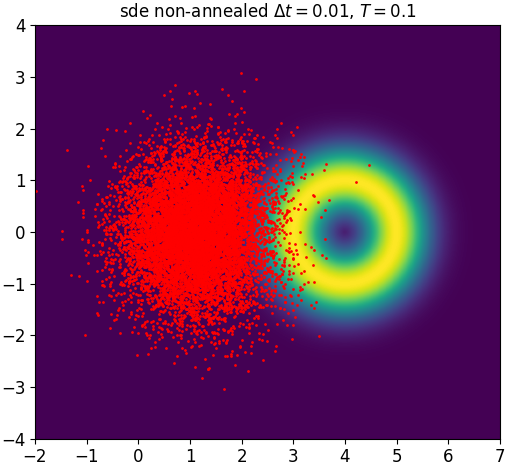}
    \includegraphics[width=0.32\linewidth]{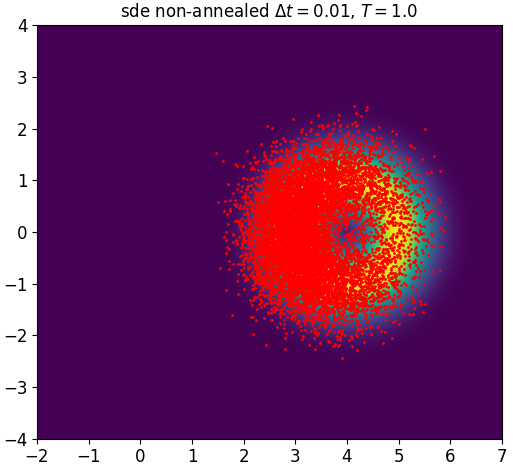}
    \includegraphics[width=0.32\linewidth]{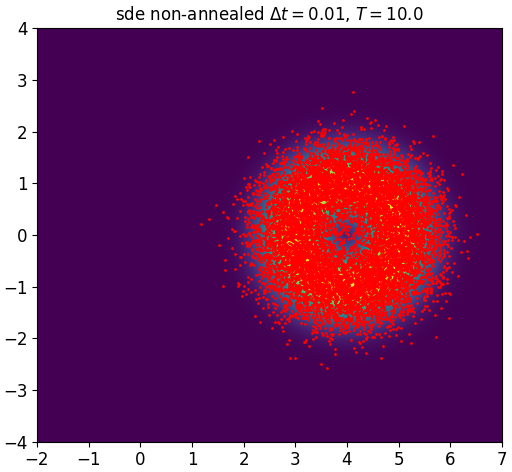}
    \caption{Experiment \ref{exp: noisy circle}, noisy circle. SBTM (ours, top) leaves the vacuum region empty, while SDE (bottom) fills it, demonstrating the effect of determinism.}
    \label{fig: noisy circle density}
\end{figure}

In this example we compare the SDE and SBTM samples from the ``noisy circle'' density $\pi \propto \exp\left(-\frac{(\|x - (4,0)\|-1)^2}{0.08}\right).$ While this example is non-log-concave, unlike other non-log-concave examples, there is no energy barrier between modes, and vanilla Wasserstein Gradient Flow converges quickly. We use 10,000 particles and time step $0.01$. While the SDE samples fill the vacuum region in Figure \ref{fig: noisy circle density} around the center due to Brownian noise, the SBTM samples leave it blank due to determinism.

\subsection{Gaussian mixture with dilation annealing}\label{exp: Gaussian mixture with dilation annealing}
\begin{figure}[htp!]
\centering
\vspace{-1.0em}
{\footnotesize
\includegraphics[width=\linewidth]{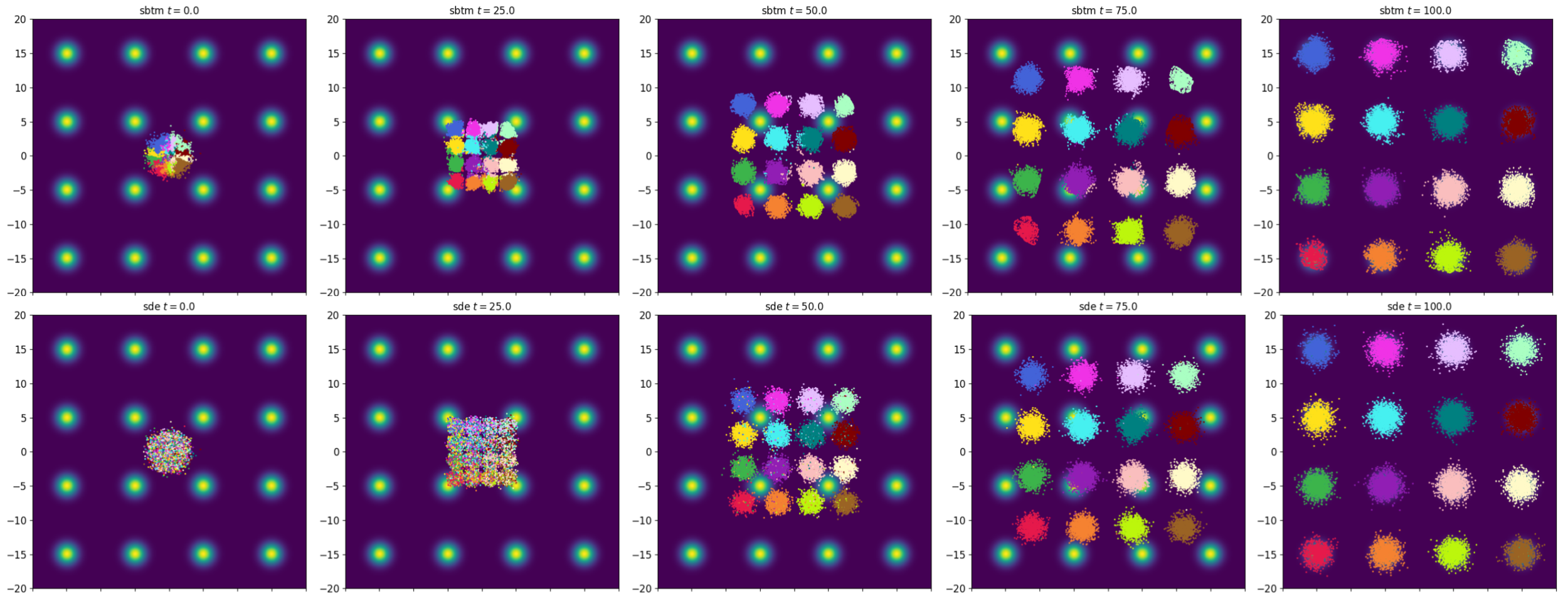}
\caption{Experiment \ref{exp: Gaussian mixture with dilation annealing}, well-separated 2D Gaussian mixture. Scatter plot over time. Top: SBTM (ours) with the dilation annealing. Bottom: SDE with the dilation annealing \cite{chehab2024practical}. SBTM separates into modes early on, compared to the SDE.}
\label{fig: gaussians far 2d heatmap}
}
\end{figure}

\begin{figure}[h]
    \centering
    \includegraphics[width=0.8\linewidth]{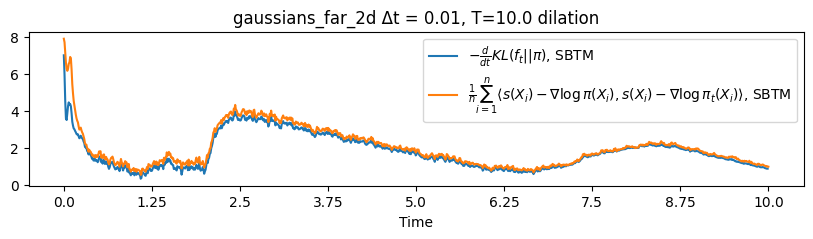}
    \caption{Experiment \ref{exp: Gaussian mixture with dilation annealing}, well-separated 2D Gaussian mixture. The estimate in \eqref{eqn: entropy dissipation in annealed dynamics, assuming zero loss} holds empirically, indicating small score-matching loss and good score approximation in annealed dynamics.}
    \label{fig: gaussians far 2d annealed entropy dissipation}
\end{figure}
To sample from a Gaussian mixture with 16 well-spaced modes we employ the annealing schedule $\pi_t(x) = \pi\left(\frac{T}{t}x\right)$ from \cite{chehab2024practical}. This is a challenging example due to the extreme non-log-concavity of the target. Figure \ref{fig: gaussians far 2d heatmap} shows the densities at different time points demonstrating early mode splitting due to deterministic dynamics. Figure \ref{fig: gaussians far 2d annealed entropy dissipation} demonstrates that the score is learned well by showing that estimate \eqref{eqn: entropy dissipation in annealed dynamics, assuming zero loss} holds almost perfectly. With the large sample size of 20,000 particles and 10,000 time steps this experiment took only 90 minutes on a single TITAN X GPU, leveraging the linear scaling of memory and time complexity of SBTM. For comparison, we were unable to use SVGD with 20,000 particles due to its $O(n^2)$ memory scaling.

\subsection{High-dimension experiments}\label{exp: High-dimension experiments}

To evaluate the scaling of our method to high-dimensional settings, we apply SBTM to produce MNIST digits \cite{lecun1998gradient}, which corresponds to sampling a $784$-dimensional distribution. We purposefully sample directly in pixel space, without using a VAE encoder, to maintain the high dimensionality of the data. To obtain $\score \pi$ we train a U-Net with score-matching. Figure \ref{fig: mnist digits batch 64} shows that SBTM produces high quality results, comparable to the SDE sample. Figure \ref{fig:sbtm_diverse_sample} shows that that SBTM maintains exploration in high dimensional spaces. The amount of training controls the strength of interaction between particles. With sufficient training, particles repel from each other, providing deterministic exploration. To confirm that $\nabla \log f_t$ is being meaningfully learned, we plot the cosine similarity to $\score \pi$ across time in Figure \ref{fig: cosine similarity}. More training epochs per step leads to faster learning of the target score $\score \pi$.

\begin{figure}[htp!]
    \centering
    \includegraphics[width=\linewidth]{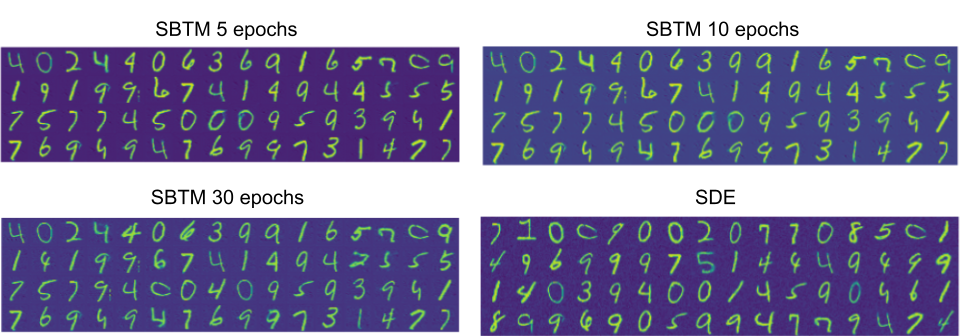}
    \caption{Experiment \ref{exp: High-dimension experiments}, high-dimensional. Sampled MNIST digits using $f_0 = \N(0,1)$, sample size $64$, time step $10^{-3}$, $30$ steps, variable amount of intermediate training. Digits generated by SBTM are competitive in visual quality.}
    \label{fig: mnist digits batch 64}
\end{figure}
\begin{figure}[htp!]
    \centering
    \includegraphics[width=1\linewidth]{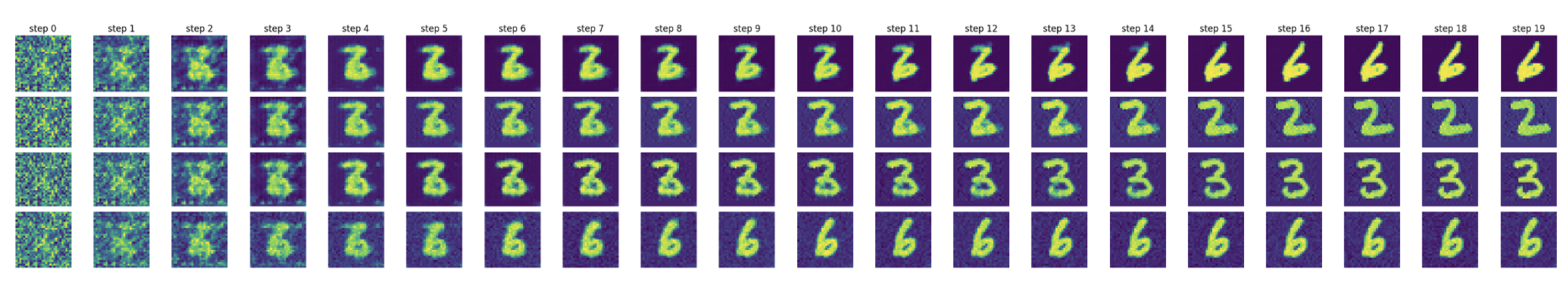}
    \caption{Experiment \ref{exp: High-dimension experiments}, high-dimensional. Starting from the same initial point, SBTM produces distinct sample trajectories depending on the training schedule. The amount of training controls the strength of interaction between particles. Top to bottom: SBTM without training (equivalent to gradient ascent on $\nabla \log \pi$), SBTM with small amount training, SBTM with large amount training, and finally the SDE (Langevin).}
    \label{fig:sbtm_diverse_sample}
\end{figure}
\begin{figure}[htp!]
    \centering
    \includegraphics[width=0.7\linewidth]{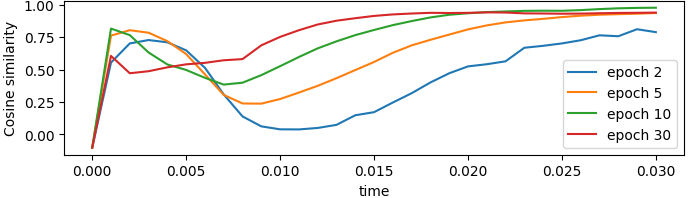}
    \caption{Experiment \ref{exp: High-dimension experiments}, high-dimensional. Cosine similarity between the true score $\nabla \log \pi$ and learned score $\nabla \log f_t$ over simulation time. Even with very little training the model learns the score well. Different lines use different numbers of epochs per time-step, effectively changing $\eta$ in \eqref{thm: main result}.}
    \label{fig: cosine similarity}
\end{figure}

\section{Conclusion}
\label{sec:conclusion}

In this work we use tools and intuition from diffusion generative modeling to tackle the harder problem of sampling $\pi$ given only access to $\score \pi$ but not samples from $\pi$. Our method allows for deterministic sampling with smooth trajectories and provably optimal rate of entropy dissipation. Additionally, access to the learned score $\score f_t$ allows for the computation of relative Fisher info to estimate convergence, making the dynamics more interpretable. Our method integrates well with annealed dynamics to sample from challenging non log-concave densities. Finally, our method scales well both in the sample size, with $O(n)$ complexity, and in dimension, as demonstrated in the 784-dimensional example.

\section*{Acknowledgments}
We thank Bamdad Hosseini and Omar Chehab for helpful discussions about this work. JH's work was partially supported by DOE grant DE-SC0023164, NSF grants DMS-2409858 and IIS-2433957, and DoD MURI grant FA9550-24-1-0254.

\printbibliography

@article{hutchinson1989stochastic,
  title={A stochastic estimator of the trace of the influence matrix for Laplacian smoothing splines},
  author={Hutchinson, Michael F},
  journal={Communications in Statistics-Simulation and Computation},
  volume={18},
  number={3},
  pages={1059--1076},
  year={1989},
  publisher={Taylor \& Francis}
}

@inproceedings{zhuo2018message,
  title={Message passing Stein variational gradient descent},
  author={Zhuo, Jingwei and Liu, Chang and Shi, Jiaxin and Zhu, Jun and Chen, Ning and Zhang, Bo},
  booktitle={International Conference on Machine Learning},
  pages={6018--6027},
  year={2018},
  organization={PMLR}
}

@article{lu2022dpm,
  title={Dpm-solver: A fast ode solver for diffusion probabilistic model sampling in around 10 steps},
  author={Lu, Cheng and Zhou, Yuhao and Bao, Fan and Chen, Jianfei and Li, Chongxuan and Zhu, Jun},
  journal={Advances in Neural Information Processing Systems},
  volume={35},
  pages={5775--5787},
  year={2022}
}

@article{he2025query,
  title={On the query complexity of sampling from non-log-concave distributions},
  author={He, Yuchen and Zhang, Chihao},
  journal={arXiv preprint arXiv:2502.06200},
  year={2025}
}

@article{dalalyan2017user,
  title={User-friendly guarantees for the Langevin Monte Carlo with inaccurate gradient},
  author={Dalalyan, Arnak and Karagulyan, Avetik},
  journal={Stochastic Processes and their Applications},
  volume={129},
  number={12},
  pages={5278--5311},
  year={2019}
}

@inproceedings{wibisono2018sampling,
  title={Sampling as optimization in the space of measures: The Langevin dynamics as a composite optimization problem},
  author={Wibisono, Andre},
  booktitle={Conference on learning theory},
  pages={2093--3027},
  year={2018},
  organization={PMLR}
}

@article{vempala2019rapid,
  title={Rapid convergence of the unadjusted langevin algorithm: Isoperimetry suffices},
  author={Vempala, Santosh and Wibisono, Andre},
  journal={Advances in neural information processing systems},
  volume={32},
  year={2019}
}

@article{xu2022normalizing,
  title={Normalizing flow neural networks by JKO scheme},
  author={Xu, Chen and Cheng, Xiuyuan and Xie, Yao},
  journal={Advances in Neural Information Processing Systems},
  volume={36},
  pages={47379--47405},
  year={2023}
}

@article{huang2024convergence,
  title={Convergence analysis of probability flow ode for score-based generative models},
  author={Huang, Daniel Zhengyu and Huang, Jiaoyang and Lin, Zhengjiang},
  journal={IEEE Transactions on Information Theory},
  year={2025},
  publisher={IEEE}
}

@article{duncan2019geometry,
  title={On the geometry of Stein variational gradient descent},
  author={Duncan, Andrew and N{\"u}sken, Nikolas and Szpruch, Lukasz},
  journal={Journal of Machine Learning Research},
  volume={24},
  number={56},
  pages={1--39},
  year={2023}
}

@article{corrales2025annealed,
  title={Annealed stein variational gradient descent for improved uncertainty estimation in full-waveform inversion},
  author={Corrales, Miguel and Berti, Sean and Denel, Bertrand and Williamson, Paul and Aleardi, Mattia and Ravasi, Matteo},
  journal={Geophysical Journal International},
  volume={241},
  number={2},
  pages={1088--1113},
  year={2025},
  publisher={Oxford University Press}
}

@article{neal2001annealed,
  title={Annealed importance sampling},
  author={Neal, Radford M},
  journal={Statistics and computing},
  volume={11},
  pages={125--139},
  year={2001},
  publisher={Springer}
}

@article{chewi2023log,
  title={Log-concave sampling},
  author={Chewi, Sinho},
  journal={Book draft available at https://chewisinho.github.io},
  volume={9},
  pages={17--18},
  year={2023}
}

@article{jordan1998variational,
  title={The variational formulation of the Fokker--Planck equation},
  author={Jordan, Richard and Kinderlehrer, David and Otto, Felix},
  journal={SIAM journal on mathematical analysis},
  volume={29},
  number={1},
  pages={1--17},
  year={1998},
  publisher={SIAM}
}

@inproceedings{chen2022sampling,
  title={Sampling is as easy as learning the score: theory for diffusion models with minimal data assumptions},
  author={Chen, Sitan and Chewi, Sinho and Li, Jerry and Li, Yuanzhi and Salim, Adil and Zhang, Anru R},
  booktitle={International Conference on Learning Representations},
  year={2023}
}

@article{karhadkar2024bounds,
  title={Bounds for the smallest eigenvalue of the NTK for arbitrary spherical data of arbitrary dimension},
  author={Karhadkar, Kedar and Murray, Michael and Montufar, Guido F},
  journal={Advances in Neural Information Processing Systems},
  volume={37},
  pages={138197--138249},
  year={2024}
}

@article{chen2024probability,
  title={The probability flow ode is provably fast},
  author={Chen, Sitan and Chewi, Sinho and Lee, Holden and Li, Yuanzhi and Lu, Jianfeng and Salim, Adil},
  journal={Advances in Neural Information Processing Systems},
  volume={36},
  year={2024}
}

@inproceedings{
song2020denoising,
title={Denoising Diffusion Implicit Models},
author={Jiaming Song and Chenlin Meng and Stefano Ermon},
booktitle={International Conference on Learning Representations},
year={2021},
url={https://openreview.net/forum?id=St1giarCHLP}
}

@article{lu2024score,
  title={Score-based transport modeling for mean-field Fokker-Planck equations},
  author={Lu, Jianfeng and Wu, Yue and Xiang, Yang},
  journal={Journal of Computational Physics},
  volume={503},
  pages={112859},
  year={2024},
  publisher={Elsevier}
}

@article{huang2024score,
  title={A score-based particle method for homogeneous landau equation},
  author={Huang, Yan and Wang, Li},
  journal={Journal of Computational Physics},
  pages={114053},
  year={2025},
  publisher={Elsevier}
}

@article{chemseddine2024neural,
  title={Neural sampling from Boltzmann densities: Fisher-Rao curves in the Wasserstein geometry},
  author={Chemseddine, Jannis and Wald, Christian and Duong, Richard and Steidl, Gabriele},
  journal={arXiv preprint arXiv:2410.03282},
  year={2024}
}

@article{mate2023learning,
  title={Learning Interpolations between Boltzmann Densities},
  author={M{\'a}t{\'e}, B{\'a}lint and Fleuret, Fran{\c{c}}ois},
  journal={Transactions on Machine Learning Research},
  year={2023}
}

@article{chehab2024practical,
  title={A Practical Diffusion Path for Sampling},
  author={Chehab, Omar and Korba, Anna},
  journal={arXiv preprint arXiv:2406.14040},
  year={2024}
}

@inproceedings{
song2020score,
title={Score-Based Generative Modeling through Stochastic Differential Equations},
author={Yang Song and Jascha Sohl-Dickstein and Diederik P Kingma and Abhishek Kumar and Stefano Ermon and Ben Poole},
booktitle={International Conference on Learning Representations},
year={2021},
url={https://openreview.net/forum?id=PxTIG12RRHS}
}

@article{song2019generative,
  title={Generative modeling by estimating gradients of the data distribution},
  author={Song, Yang and Ermon, Stefano},
  journal={Advances in neural information processing systems},
  volume={32},
  year={2019}
}

@article{liu2016stein,
  title={Stein variational gradient descent: A general purpose bayesian inference algorithm},
  author={Liu, Qiang and Wang, Dilin},
  journal={Advances in neural information processing systems},
  volume={29},
  year={2016}
}

@article{liu2017stein,
  title={Stein variational gradient descent as gradient flow},
  author={Liu, Qiang},
  journal={Advances in neural information processing systems},
  volume={30},
  year={2017}
}

@article{hyvarinen2005scorematching,
  author  = {Aapo Hyv{{\"a}}rinen},
  title   = {Estimation of Non-Normalized Statistical Models by Score Matching},
  journal = {Journal of Machine Learning Research},
  year    = {2005},
  volume  = {6},
  number  = {24},
  pages   = {695--709},
  url     = {http://jmlr.org/papers/v6/hyvarinen05a.html}
}

@article{boffi2023probability,
  title={Probability flow solution of the Fokker--Planck equation},
  author={Boffi, Nicholas M and Vanden-Eijnden, Eric},
  journal={Machine Learning: Science and Technology},
  volume={4},
  number={3},
  pages={035012},
  year={2023},
  publisher={IOP Publishing}
}

@inproceedings{elamvazhuthi2024score,
  title={A Score-Based Deterministic Diffusion Algorithm with Smooth Scores for General Distributions},
  author={Elamvazhuthi, Karthik and Zhang, Xuechen and Jacobs, Matthew and Oymak, Samet and Pasqualetti, Fabio},
  booktitle={Proceedings of the AAAI Conference on Artificial Intelligence},
  volume={38},
  number={11},
  pages={11866--11873},
  year={2024}
}

@article{ilin2024transport,
  author    = {Ilin, Vasily and Hu, Jingwei and Wang, Zhenfu},
  title     = {Transport based particle methods for the Fokker--Planck--Landau equation},
  journal   = {Communications in Mathematical Sciences},
  volume    = {23},
  number    = {7},
  pages     = {1763--1788},
  year      = {2025},
  publisher = {International Press of Boston},
}

@article{lecun1998gradient,
  title={Gradient-based learning applied to document recognition},
  author={LeCun, Yann and Bottou, L{\'e}on and Bengio, Yoshua and Haffner, Patrick},
  journal={Proceedings of the IEEE},
  volume={86},
  number={11},
  pages={2278--2324},
  publisher={IEEE}
}

\end{document}